\documentclass{article}
\usepackage{amsmath}
\usepackage{amsthm}
\usepackage{amsfonts}
\usepackage{amssymb}
\usepackage{dsfont}
\usepackage[pdftex]{graphicx}
\usepackage{authblk}
\usepackage{cite}
\usepackage{caption}
\usepackage{subcaption}
\newtheorem{theorem}{Theorem}
\newtheorem{remark}{Remark}
\theoremstyle{theorem}
\newtheorem{corollary}[theorem]{Corollary}

\newcommand{\E}{\mathbb{E}}
\newcommand{\R}{\mathbb{R}}
\newcommand{\Prob}{\mathbb{P}}
\newcommand{\mx}{\mathcal{X}}
\newcommand{\md}{\mathcal{D}}
\newcommand{\mpe}{\mathcal{P}}
\newcommand{\der}{\mathrm{d}}
\DeclareMathOperator*{\argmin}{arg\,min}


\newcommand{\mf}{\mathcal{F}}

\begin{document}

\title{Prediction interval for neural network models using weighted asymmetric loss functions. \vspace{-3.5mm}}
\author[1]{Milo Grillo\thanks{milo.grillo@hu-berlin.de}}
\author[2]{Yunpeng Han\thanks{yunpeng@savvie.io}}
\author[2,3]{Agnieszka Werpachowska\thanks{a.m.werpachowska@gmail.com}}
\affil[1]{\small Humboldt University of Berlin, Germany}
\affil[2]{\small Savvie AS, Oslo, Norway}
\affil[2]{\small Centre for Microsimulation and Policy Analysis, University of Essex, UK}

\maketitle

\begin{abstract}
\noindent We propose a simple and efficient approach to generate a prediction intervals (PI) for approximated and forecasted trends. Our method leverages a weighted asymmetric loss function to estimate the lower and upper bounds of the PI, with the weights determined by its coverage probability. We provide a concise mathematical proof of the method, show how it can be extended to derive PIs for parametrised functions and discuss its effectiveness when training deep neural networks. The presented tests of the method on a real-world forecasting task using a neural network-based model show that it can produce reliable PIs in complex machine learning scenarios.
\end{abstract}

\section{Introduction}
\label{sec:1}
Neural network models are increasingly often used in prediction tasks, for example in weather~\cite{Liu2021}, water level~\cite{Zhang2015}, price~\cite{Duan2022}, electricity grid load~\cite{Rana2013}, ecology~\cite{Miok2018}, demographics~\cite{Werpachowska2018} or sales forecasting. However, their often cited weakness is that---in their vanilla form---they provide only point predictions. Meanwhile, many of their users are interested also in prediction intervals (PIs), that is, ranges $[l, u]$ containing forecasted values with a given probability (e.g.~95\%).

Several approaches have been proposed to facilitate the estimation of PIs (see~\cite{BevenBinley1992,MacKay1992,NixWeigend1994,Chryssolouris1996,Heskes1996,HwangDing1997,Zapranis2005,Khosravi2011,Rana2013,Zhang2015,Miok2018,Liu2021} and references therein):
\begin{enumerate}
\item the delta method, which assumes that prediction errors are homogeneous and normally distributed~\cite{Chryssolouris1996,HwangDing1997};
\item Bayesian inference~\cite{MacKay1992}, which requires a detailed model of sources of uncertainty, and is extremely expensive computationally for realistic forecasting scenarios~\cite{Khosravi2011};
\item Generalized Likelihood Uncertainty Estimation (GLUE)~\cite{BevenBinley1992}, which requires multiple runs of the model with parameters sampled from a distribution specified by the modeller;
\item bootstrap~\cite{Heskes1996}, which generates multiple training datasets, leading to high computational cost for large datasets~\cite{Khosravi2011};
\item Mean-Variance Estimation (MVE)~\cite{NixWeigend1994}, which is less computationally demanding than the methods mentioned above but also assumes a normal distribution of errors and gives poor results~\cite{Khosravi2011};
\item Lower Upper Bound Estimation (LUBE), which trains the neural network model to directly generate estimations of the lower and upper bounds of the prediction interval using a specially designed training procedure with tunable parameters~\cite{Khosravi2011, Rana2013,Zhang2015,Liu2021}.
\end{enumerate}
The existing methods are either overly restrictive (the delta method, MVE) or too computationally expensive. We propose a method which is closest in spirit to LUBE (we train a model to predict either a lower or an upper bound for the PI) but simpler and less computationally expensive, because it does not require any parameter tuning.

\section{Problem statement}
\label{sec:problem-statement}

We consider a prediction problem $x \mapsto y$, where $x \in \mx$ are features (e.g.~$x \in \R^d$) and $y \in \R$ is the predicted variable. We assume that observed data $\md := \{(x, y)\}^N \subset \mx \times \R$ are statistically independent $N$-realisations of a pair of random variables $(X, Y)$ with an unknown joint distribution $\mpe$. We also consider a model $g_\theta$ which, given $x \in \mx$, produces a prediction $g_\theta(x)$, where $\theta$ is are model parameters in parameter space $\Theta = \R^m$. When forecasting, the prediction is also a function of an independent ``time'' variable $t$, which is simply included in $\mx$.

The standard model training procedure aims to find such $\theta$ that, given $x \in \mx$, $g_\theta(\cdot)$ is an good point estimate of $Y|X$, e.g.~$g_\theta(x) \approx \E[Y|X=x]$. This is achieved by minimising a loss function---by abuse of notation---of the form $l(y, y') = l(y - y')$ with a minimum at $y = y'$ and increasing sufficiently fast for $|y - y'| \to \infty$, where $y$ is the observed target value and $y'$ is the model prediction. More precisely, we minimise the sample average of the loss function $l$ over the parameters $\theta$:
\[
\hat\theta = \argmin_\theta \sum_{i=1}^N l\left( y_i, g_\theta(x_i) \right) \ .
\]

The above procedure can be given a simple probabilistic interpretation by assuming that the target value $y$ is a realisation of a random variable $Y$ with the distribution $\mu + Z$, where $\mu$ is an unknown ``true value'' and $Z$ is an i.i.d. error term with a probability density function $\rho(z) \sim \exp( - l(z) )$. Two well-known functions, Mean Squared Error (MSE) and Mean Absolute Error (MAE), correspond to assuming a Gaussian or Laplace distribution for $Z$, respectively. The value which minimises the loss function $l$ corresponds then to the maximum log-likelihood estimation of the unknown parameter $\mu$, since \mbox{$\ln P(y|\mu) \sim -l(y - \mu)$}.

In this paper we focus on the MAE, in which case the average loss function $l$ (i.e.\,negative log-likelihood of data $\md$)
\[
l(y, y') = |y - y'|
\]
Given an i.i.d.~sample $\{ y_i \}_{i=1}^N$, we thus try to minimise
\[
\frac{1}{N} \sum_{i=1}^N \lvert y_i - y' \rvert
\]
which for $N \to \infty$ equals $\mathbb{E}[\lvert Y - y' \rvert]$. The \emph{optimal} value of $y'$, i.e.~the value which minimises the loss, noted as $\hat{y}$, equals
\[
\hat{y} = \argmin_{y' \in \mathbb{R}} \E [|Y-y'|] \ \text{ for } Y = \mu + L \ ,
\]
where $L$ has Laplace distribution with density $\rho_L(z) = e^{-|z|}/2$. The minimum fulfills the condition $\partial \mathbb{E} [|Y-y'|] / \partial y' = 0$. Since
\[
\begin{split}
\mathbb{E} [|Y-y'|] &= \mathbb{E} [|\mu + L - y'|] \\
&= \frac{1}{2} \int_{y'-\mu}^\infty e^{-|z|} (z + \mu - y') dz - \frac{1}{2} \int_{-\infty}^{y'-\mu} e^{-|z|} (z + \mu - y') dz \ ,
\end{split}
\]
we have 
\[
\frac{ \partial \mathbb{E} [|Y-y'|] } { \partial y'} = \frac{1}{2} \int_{-\infty}^{y'-\mu}  e^{-|z|} dz - \frac{1}{2} \int_{y'-\mu}^\infty e^{-|z|} dz \ ,
\]
which is zero iff $y' - \mu = 0$, hence $\hat y = \mu$. For a \emph{finite} sample $[ y_i ]_{i=1}^N$, $\hat{y}$ is the sample median, which approaches $\mu$ as $N \to \infty$.

In prediction, we work with an independent variable $X$ and a dependent variable $Y$, under the assumption that there exists some mapping $g$ such that $g(X) = Y + \epsilon$ with some error $\epsilon$. We aim to find a prediction interval such that given an $x$, the predicted value $y$ lies within this interval with probability $\beta$. Note that this problem is equivalent to finding the $\alpha_l$-th and $\alpha_u$-th percentile of the distribution $Y | X$ , such that $0 \leq \alpha_l \leq \alpha_u \leq 1$ and $\alpha_u - \alpha_l = \beta$. These percentiles then correspond to the lower or upper bound of the PI, while $\beta$ is the coverage probability of the interval indicating the level of confidence associated with it. To this end, we are going to generalise the above result and train the model to predict a desired percentile of the distribution $Y|X$. 

\section{Main result}
\label{sec:main}
We show semi-analytically that the prediction interval can be calculated in an efficient way by training the model minimising a weighted, i.e.~not symmetric around 0, MAE loss function:
\[
l(y - y') = 2\left[ \alpha (y - y')^+ + (1 - \alpha) (y - y')^- \right]\ ,
\]
for $\alpha \in (0, 1)$. The factor of 2 is introduced to rescale the defined loss to match the standard MAE definition.

The main mathematical result is derived from the insight presented in our Theorem~\ref{thm:main} below. It relates the choice of loss function to the resulting corresponding minimizer, for a specific class of loss functions. 

\begin{theorem}
\label{thm:main}
    For some $\alpha \in (0,1)$, let $l_\alpha : \R^2 \to \R$ be a loss function, defined by
    \begin{equation}
		\label{eq:loss}
        l_\alpha(y, \hat y) = \begin{cases} 2\,(1 - \alpha) \lvert y - \hat y \rvert & \text{ for } y - \hat y < 0 \\ 2\,\alpha \lvert y - \hat y \rvert & \text{ otherwise} \end{cases}
    \end{equation}
    Let $Y$ be a random variable on probability space $(\Omega, \mf, \Prob)$ with distribution $\mu_Y := Y\#\Prob$. Then, the value $\hat y$ which minimises
    $\E\left[l_\alpha(Y, \hat y)\right]$ is the $\alpha$-th percentile of $Y$.
\end{theorem}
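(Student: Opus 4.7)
The plan is to follow the same blueprint used in the excerpt for the symmetric MAE case, but now with the asymmetric weights $\alpha$ and $1-\alpha$. First I would rewrite the objective
\[
F(\hat y) := \E[l(Y, \hat y)] = (1-\alpha)\int_{\{y<\hat y\}}(\hat y - y)\,\der\mu_Y(y) + \alpha\int_{\{y\geq \hat y\}}(y - \hat y)\,\der\mu_Y(y),
\]
so that the integrand vanishes continuously at the split point $y=\hat y$. Because of this vanishing, the Leibniz boundary terms are zero, and differentiating under the integral gives
\[
F'(\hat y) = (1-\alpha)F_Y(\hat y) - \alpha\bigl(1 - F_Y(\hat y)\bigr) = F_Y(\hat y) - \alpha,
\]
where $F_Y$ is the CDF of $Y$. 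Setting $F'(\hat y)=0$ yields $F_Y(\hat y)=\alpha$, i.e.\ $\hat y$ is precisely the $\alpha$-th percentile of $Y$.

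Next I would justify that the critical point is a minimum rather than a maximum or saddle. The cleanest route is convexity: for each fixed $y$, the map $\hat y\mapsto l(y,\hat y)$ is a piecewise linear convex function (kink at $\hat y=y$), so its expectation $F$ is convex on $\R$. Any stationary point of a convex function is a global minimizer, which settles the question and also implies that $F$ grows at infinity (ensuring an actual minimizer exists in the first place, at least when $\E|Y|<\infty$).

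The main obstacle is handling distributions $\mu_Y$ that have atoms or flat regions in $F_Y$, in which case the $\alpha$-percentile may fail to be unique and $F$ may fail to be differentiable at the candidate minimizer. To deal with this I would replace the derivative argument by a subdifferential argument: compute the left and right derivatives of $F$ explicitly as
\[
F'_-(\hat y) = F_Y(\hat y^-) - \alpha, \qquad F'_+(\hat y) = F_Y(\hat y) - \alpha,
\]
and observe that $0\in[F'_-(\hat y),F'_+(\hat y)]$ holds exactly when $F_Y(\hat y^-)\leq \alpha \leq F_Y(\hat y)$, which is the standard definition of the $\alpha$-quantile. Convexity of $F$ again guarantees this condition is necessary and sufficient for $\hat y$ to be a minimizer, so the theorem holds in full generality (with the understanding that ``the $\alpha$-percentile'' may refer to an interval of minimizers in degenerate cases).
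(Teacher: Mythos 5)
Your proof is correct and, at its core, performs the same computation as the paper: split the expectation at $\hat y$, differentiate, and read off $(1-\alpha)F_Y(\hat y) - \alpha\bigl(1 - F_Y(\hat y)\bigr) = 0$, i.e.\ $F_Y(\hat y) = \alpha$. But you go further in two ways the paper does not, and both additions are genuine improvements rather than decoration. First, the paper uses only the first-order condition and itself concedes, in a remark, that its stationary point could a priori be a local maximum or a saddle point; your observation that $\hat y \mapsto l(y,\hat y)$ is piecewise linear and convex for each fixed $y$, so that $F(\hat y) = \E[l(Y,\hat y)]$ is convex and every stationary point is a global minimizer, closes that gap outright, and also yields existence of a minimizer when $\E\lvert Y\rvert < \infty$ (a hypothesis the paper never states but implicitly needs for the expectation to be finite at all). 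Second, the paper tacitly assumes $F_Y$ is continuous and strictly increasing near the quantile, writing $\hat y = F_Y^{-1}(\alpha)$ without comment; your one-sided derivatives $F'_-(\hat y) = F_Y(\hat y^-) - \alpha$ and $F'_+(\hat y) = F_Y(\hat y) - \alpha$, combined with the subgradient criterion $0 \in [F'_-(\hat y), F'_+(\hat y)]$, recover exactly the standard quantile condition $F_Y(\hat y^-) \le \alpha \le F_Y(\hat y)$, so your argument covers atomic distributions and flat stretches of the CDF, where the minimizer may be a nondegenerate interval. This matches, and makes precise, the paper's own hedge that the distribution must ``allow for an $\alpha$-th percentile.'' In short: same skeleton, but your version is the one that actually proves minimality and holds in full generality; the paper's is a quicker formal computation valid under unstated regularity assumptions.
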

\begin{proof}
    The goal is to find $\hat y$, which minimises $\E\left[l_\alpha(Y, \hat y) \right]$. For this $\hat y$, we have
    \begin{equation}
		\begin{split}
        0 &= \frac{\partial }{\partial \hat y} \mathbb{E} [l_\alpha(Y, \hat y)] \\
				&= \frac{\partial }{\partial \hat y} \Bigg[\int_{-\infty}^{\hat y} -(1 - \alpha) (y - \hat y)\ \der\mu_Y (y) + \int_{\hat y}^\infty \alpha (y - \hat y)\ \der\mu_Y (y) \Bigg] \\
        &= (1 - \alpha) \int_{-\infty}^{\hat y}\ \der\mu_Y (y) - \alpha \int_{\hat y}^\infty\ \der\mu_Y (y) \\
				&= (1-\alpha) F_Y(\hat y) - \alpha (1- F_Y(\hat y)) \ ,
		\end{split}
    \end{equation}
    where $F_Y(\hat y) = \int_{-\infty}^{\hat y} \ \der\mu_Y (y)$ is the CDF of $Y$ and we neglect the constant factor of 2. Hence, $\hat y = F_Y^{-1} (\alpha)$.
\end{proof}

\begin{remark}
While the derivation works for any distribution $\mu$ supported on the real axis, the interpretation of MAE minimisation as maximum likelihood estimation of $\hat y$ for $\alpha = 1/2$ is valid only when $Y$ has a Laplace distribution. Note that our result does not require assuming an independent Laplace distribution for the error term.

\end{remark}

\begin{remark}
An analogous calculation shows that minimising $\sum_i l(y_i, \hat y)$ over $\hat y$ leads to $\hat y$ equal to the $\alpha$-th percentile of the sample $\{ y_i \}$.
\end{remark}

\begin{corollary} 
\label{col:dependence-on-theta}
    If $\hat y$ is restricted to $\hat y = g(\theta)$ for some differentiable function $g: \R^d \to \R$, the theorem still holds assuming that $\partial g(\theta) / \partial \theta \neq 0$ for all $\theta$.
\end{corollary}
\begin{remark}
Mind that $\partial g(\theta)/\partial \theta$ is a vector, as it is a gradient rather than a scalar derivative, and thus only one of its components must be non-zero.
\end{remark}
\begin{proof}
In the proof of Theorem~\ref{thm:main}, we replace $y$ by $g(\theta)$ and differentiate with $\theta$ instead of $\hat y$. In the last two lines, $\partial_\theta g(\theta)$ appears in front of both scalar integrals. 
\begin{equation}
		\begin{split}
        0 &= \frac{\partial }{\partial \theta} \mathbb{E} [l_\alpha(Y, g(\theta))] \\
&= \frac{\partial g(\theta)}{\partial \theta} \left[ (1 - \alpha) \int_{-\infty}^{g(\theta)} \ \der\mu_Y (y) - \alpha \int_{g(\theta)}^\infty \ \der\mu_Y (y) \right] \\
&= \frac{\partial g(\theta)}{\partial \theta} \left[ (1-\alpha) F_Y(g(\theta)) - \alpha (1- F_Y(g(\theta))) \right] \ .
		\end{split}
\end{equation}
		
Calculating the Euclidean norm of this vector equation, we obtain
\begin{equation}
\lVert \frac{\partial g(\theta)}{\partial \theta}\rVert \cdot|(1 - \alpha) F_Y(g(\theta) - \alpha (1 - F_Y(g(\theta)) | = 0
\end{equation}

Since ${\partial g(\theta)}/{\partial \theta} \neq 0$, we can divide both sides by its norm, yielding $g(\theta) = F_Y^{-1}(\alpha)$.
\end{proof}

\begin{remark}
Since the only property of a minimum we have used when deriving Theorem~\ref{thm:main} and Corollary~\ref{col:dependence-on-theta} is that the first derivative is zero in the minimum, $\hat y$ (or $\theta$) could be also a local minimum, local maximum or even a saddle point.
\end{remark}

\section{Connection to problem statement}
\label{sec:connection}

The result of the previous section can be connected with the problem stated in Sec.\,\ref{sec:problem-statement} by considering the dependence of the predicted variable $Y$ on features $X$ (which include also the time variable). Assume that we have found such a model parameter set $\theta$ that the expected value of the loss function $l$ is minimized for every value of $X$,
\[
\frac{\partial \E [ l(Y, g_\theta(X)) | X ] }{\partial \theta}= 0
\]
for any $X$. Then, the results of Section~\ref{sec:main} can be easily modified to show that $g_\theta(X) = F_{Y|X}^{-1}(\alpha)$, by replacing $g(\theta)$ with $g_\theta(X)$, $\mu_Y$ with $\mu_{Y|X}$ (distribution of $Y$ conditional on $X$) and $F_Y$ with $F_{Y|X}$ (CDF of $Y$ conditional on $X$), and the assumption $\partial g(\theta) / \partial \theta \neq 0$ for all $\theta$ with the assumption $\partial g_\theta(X) / \partial \theta \neq 0$ for all $\theta$ and $X$.

Standard model training, however, does not minimize $\E [ l(Y, g_\theta(X)) | X ]$, but $\E [ l(Y, g_\theta(X)) ]$ (as mentioned above, averaging over all $X$ in a large training set), which is the quantity considered by its proofs of convergence~\cite{Shai2014}. The condition $\frac{\partial }{\partial \theta} \mathbb{E} [l(Y, \hat y)]  = 0$ leads then to
\begin{equation}
\label{eq:cond-sgd}
\begin{split}
0 &= \frac{\partial }{\partial \theta} \mathbb{E} [l(Y, \hat y)]  \\
= &\int_\mx \frac{\partial g_\theta(x)}{\partial \theta} \left[ (1 - \alpha) F_{Y|X=x}(g_\theta(x)) + \alpha (1 - F_{Y|X=x}(g_\theta(x)) \right] \ \der\mu_X(x),
\end{split}
\end{equation}
where $\mu_X$ is the distribution of feature vectors $X$.

Consider now a model with a separate constant term, $g_\theta(x) = G(\xi)$, where $G(\xi)$ is strictly monotonic function of $\xi=\theta_0 + f_\theta(x)$. Most neural network prediction models are indeed of such a form. The component of Eq.~\eqref{eq:cond-sgd} for $\theta_0$ has the form
\begin{equation}
\label{eq:const-term}
\begin{split}
0 &= \frac{\partial G}{\partial \xi} \int_\mx \left[ (1 - \alpha) F_{Y|X=x}(g_\theta(x)) + \alpha (1 - F_{Y|X=x}(g_\theta(x)) \right] \ \der\mu_X (x) \\
&= \frac{\partial G}{\partial \xi} \left[(1 - \alpha) \E[F_{Y|X}(g_\theta(X))] + \alpha (1 - \E[F_{Y|X}(g_\theta(X))] ) \right]\ .
\end{split}
\end{equation}
Hence, we have $\E[F_{Y|X}(g_\theta(X))] = \alpha$ since ${\partial G}/{\partial \xi} \neq 0$.

We would like to make a stronger statement, that is, $F_{Y|X}(g_\theta(X)) = \alpha$ (note that, given the above, $\partial g_\theta(X) / \partial \theta \neq 0$ for all $\theta$ and $X$), but the integration over $X$ precludes it. To support this conjecture we indicate the following heuristic arguments:
\begin{enumerate}
	\item Recall that stochastic gradient descent (SGD) uses mini-batches to compute the loss gradient. By training long enough with small enough mini-batches sampled \emph{with replacement} (to generate a greater variance of mini-batches), we can hope to at least approximately ensure that $\E [ l(Y, g_\theta(X) ]$ is minimized ``point-wise'' w.r.t.~to $X$. This is supported by the fact that 
 \[\E [ l(Y, g_\theta(X))] = \int_\mx \E [ l(Y, g_\theta(X)) | X = x ]  \rho_X(dx)\] On top of that, section 3 in~\cite{Zhou2019} and Theorem 1 in \cite{Turinici2021} suggest that SGD should still work for loss functions of the form \eqref{eq:loss} for reasonable network choices.
	\item If the model $g_\theta$ is flexible enough that it is capable of globally minimising $\E [ l(Y, g_\theta(X)) | X ]$ for every $X$ (for example, it is produced by a very large deep neural network), then this global pointwise minimum has to coincide with the global ``average'' minimum of $\E [ l(Y, g_\theta(X)]$ and can be found by training the model long enough. Formally, if there exists such a $\hat\theta$ that $\hat\theta = \argmin_\theta \E [ l(Y, g_\theta(X)) | X ]$ for all $X$, then also $\hat\theta = \argmin_\theta \E [ l(Y, g_\theta(X)]$.
	\item Recall that ${\partial g_\theta(X)}/{\partial \theta}$ is a vector, not a scalar. Therefore, Eq.~\eqref{eq:cond-sgd} is actually a set of conditions of the form $\E_X[A_i(X) B(X)] = 0$, where $A_i(X) = \partial g_\theta (X) / \partial \theta_i$ and $B(X) = (1 - \alpha) F_{Y|X}(g_\theta(X)) + \alpha (1 - F_{Y|X}(g_\theta(X))$. The larger the neural network, the more conditions we have on the pairs $A_i B$, but they can be all automatically satisfied if $B(X) \equiv 0$.	Moreover, the value of $B$ depends only on the predictions $g_\theta(X)$ produced by the model and the distribution of $Y$ given $X$ (which is dictated by the data distribution $\md$). On the other hand, the derivatives $\partial g_\theta (X) / \partial \theta_i$ depend also on the internal architecture of the neural network (for example, the type of activation function used, number of layers, etc). If, what is likely, there exists multiple architectures equally capable of minimising the expected loss, Eq.~\eqref{eq:cond-sgd} must be true for all corresponding functions $\partial g_\theta (X) / \partial \theta_i$ at the same time, which also suggests $B(X) = 0$.
	\item Consider now the limit of a very large, extremely expressive neural network, which achieves a global minimum of $\E [ l(Y, g_\theta(X) ]$. We can describe it using the following parameterisation: let $\theta = [ \theta_i ]_{i=1}^N$ and $g_\theta(x) \approx \sum_{i=1}^N \theta_i k_N(x - x_i)$, where $x_i$ belongs to the support of the distribution of $X$, $\int  k_N(x - x_i)  \rho_X(dx) = 1$, and $k_N(d) > 0$ approaches the Dirac delta distribution in the limit $N \to \infty$. We have $\partial g_\theta(x) / \partial \theta_i = k_N(x - x_i)$, and, for each $i$,
	\[
	\begin{split}
	0 = \int_\mx \frac{\partial g_\theta(x)}{\partial \theta_i} \left[ (1 - \alpha) F_{Y|X=x}(g_\theta(x)) + \alpha (1 - F_{Y|X=x}(g_\theta(x)) \right]  \rho_X(dx) &\\
\approx \int k_N(x - x_i) \left[ (1 - \alpha) F_{Y|X=x}(g_\theta(x)) \right.  \qquad&\\
\left. + \alpha (1 - F_{Y|X=x}(g_\theta(x)) \right] \rho_X(dx) \ . \qquad&
	\end{split}
	\]
	In the limit of $N \to \infty$, $k(d) \to \delta(d)$ and the above integral becomes
	\[
	0 = (1 - \alpha) F_{Y|X=x_i}(g_\theta(x_i)) + \alpha (1 - F_{Y|X=x_i}(g_\theta(x_i)) \quad \text{for all $i$.}
	\]
	At the same time, as $N \to \infty$, the feature vectors $x_i$ have to progressively fill in the entire support of the distribution of $X$, hence the above condition is asymptotically equivalent to
	\[
	0 = (1 - \alpha) F_{Y|X=x}(g_\theta(x)) + \alpha (1 - F_{Y|X=x}(g_\theta(x)) \quad \text{for all $x \in \mx$,}
	\]
	or simply $F_{Y|X}(g_\theta(X)) = \alpha$.

\item As the opposite case to the above, consider a naive model $g_\theta(X) = \theta_0 + \theta_1 \cdot X$ describing a linear process, $Y = y_0 + w \cdot X + Z$, where the error term $Z \perp X$ is i.i.d.~and $y_0, w$ are unknown constants. From Eq.~\eqref{eq:const-term} we have $\E[F_{Y|X}(g_\theta(X))] = \alpha$. Writing Eq.~\eqref{eq:cond-sgd} for $\theta_1$, we obtain
\[
\begin{split}
0 = \int x \left[ (1 - \alpha) F_{Y|X=x}(\theta_0 + \theta_1 \cdot x) + \alpha (1 - F_{Y|X=x}(\theta_0 + \theta_1 \cdot x) \right]  \rho_X(dx)& \\
= \int x \left[ (1 - \alpha) F_Z(\theta_0 - y_0 + (\theta_1 - w) \cdot x) \right. \qquad&\\
\left.+ \alpha (1 - F_Z(\theta_0 - y_0 + (\theta_1 - w) \cdot x) \right]  \rho_X(dx)&
\end{split}
\]
It is easy to see that both conditions are satisfied by $\theta_1 = w$ and $\theta_0 = F^{-1}_Z(\alpha)$. Then, we have $F_{Y|X}(g_\theta(X)) = \alpha$ for all $X$. Hence, our method works perfectly in the linear case with i.i.d.~error terms.
\end{enumerate}

\section{Experiments and results}
\label{sec:Experiments and Results}

In this section, we present the results of two case studies: the first is forecasting the daily demand for 15 different food products, and the second is forecasting revenue for 10 food stores that sell them. We utilize historical data to back-test our forecasts with PIs.

The core forecasting model is an adaptation of N-BEATS architecture~\cite{oreshkin2019n}, whose initial version was developed by Molander~\cite{Molander2021}. For each study, we train the model simultaneously on all training data in the spirit of global learning approach. The product demand dataset consists in time series of daily product sales of varying length spanning up to 4 years, product name and category, weather information, a custom calendar indicating special days and periods, time of selling out, as well as geographic location and store type. The revenue dataset includes time-series of revenues for the stores, along with information on store type, location, weather, and the custom calendar. The features are represented as integers, one-hot-encoded vectors or (trainable or not) word embeddings. They have adjustable training weights and can be excluded by the hyperparameter tuning. Custom sample boosting techniques support better performance on the special calendar days. The model also attempts to detect anomalies in sale trends resulting from Covid-related lockdowns.

During the training, mini-batches are drawn with replacement, following point\,1 of Sec.\,\ref{sec:connection}. The Tensorflow implementation of our model uses the Adam optimiser~\cite{Kingma2015AdamAM} with early stopping and learning rate decay (starting from $10^{-3}$). The algorithm converges in less than 70 epochs.

We optimise a multitask loss function, which is the sum of three distinct loss functions defined by Eq.\,\ref{thm:main}, namely $l(\alpha=0.5) + l(\alpha=0.5 - \beta/2) + l(\alpha=0.5 + \beta/2)$. Consequently, we estimate the result and the PIs with a desired coverage probability $\beta$ at once. Training a single neural network to predict all three values simultaneously is not only more computationally efficient, but also helps capture their dependencies, thereby avoiding stochastic artefacts (such as crossing PIs, which we occasionally observed when estimating the three values in separate training runs).

The model utilizes a 14-day window of recent data, weather forecasts, and upcoming events to predict the next day's sales or revenues. We limit the forecast horizon to 1 day, as per the theoretical assumptions. Therefore, for the period from 2 April to 21 July, we generate 111 predictions per product, which we back-test against the historical data to evaluate our model's performance. In addition, we experiment with longer forecast horizons of one or two weeks (which utilise additional extensions to the model to preserve theoretical correctness and efficacy), demonstrating the stability of our method.

\begin{figure}[ht!]
        \centering
        \includegraphics[width=\textwidth,trim={0 0 0 18},clip]{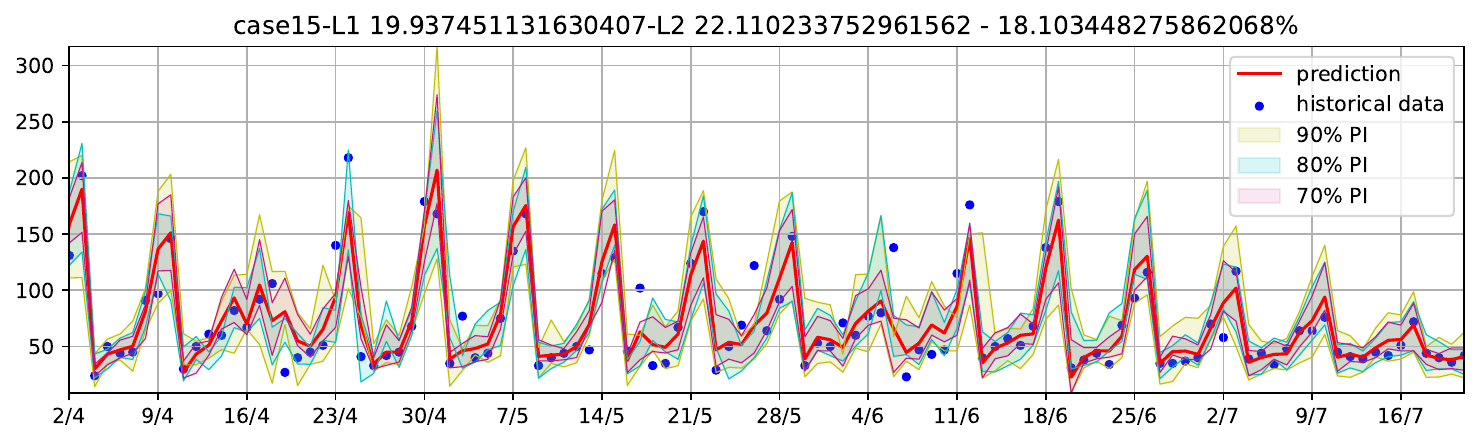}
				\includegraphics[width=\textwidth,trim={0 0 0 18},clip]{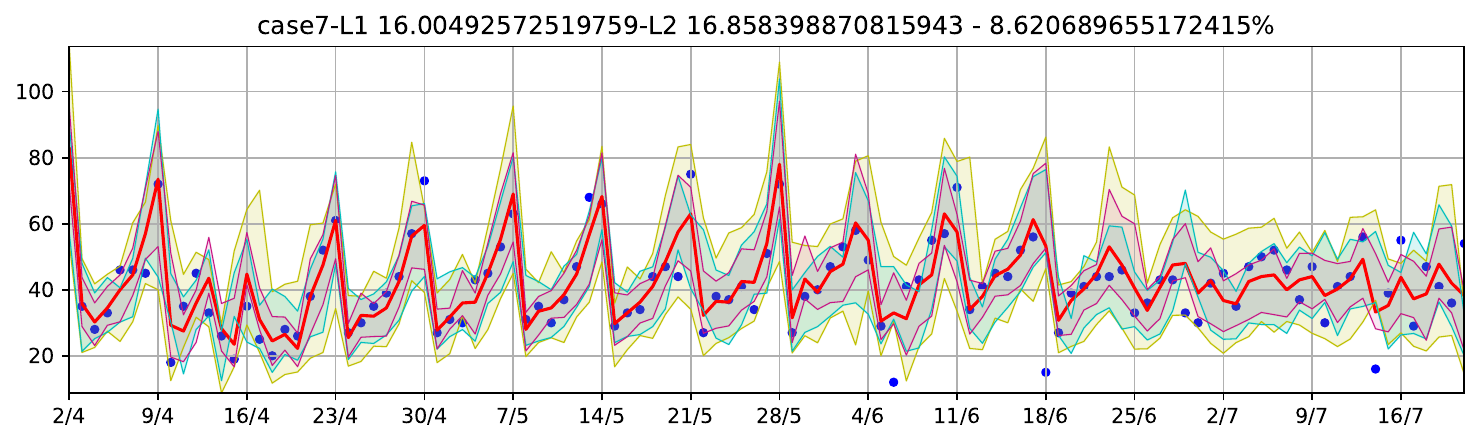} 
				\includegraphics[width=\textwidth,trim={0 0 0 18},clip]{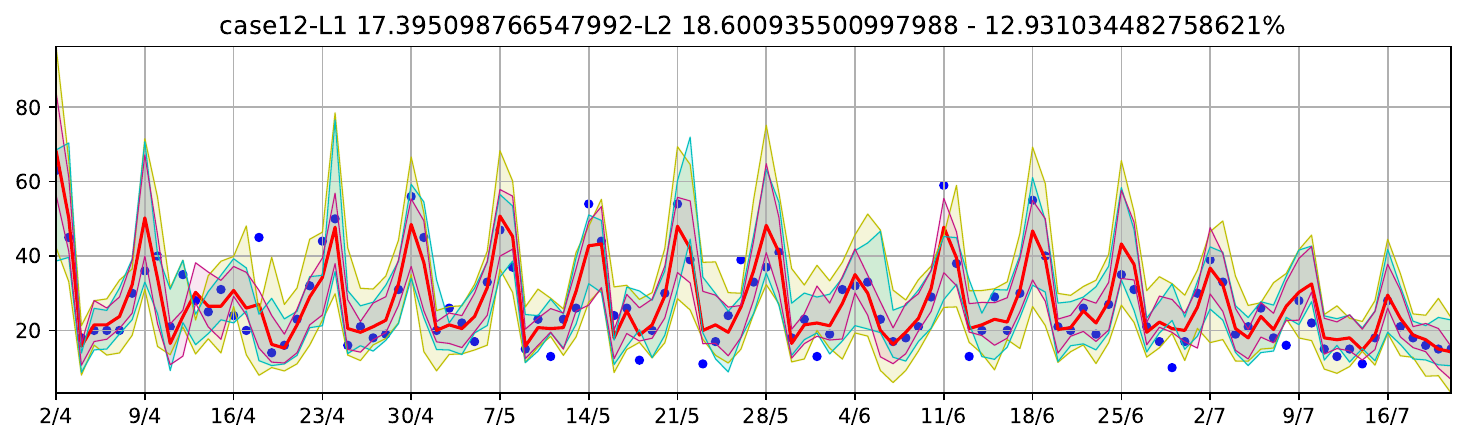}
				\includegraphics[width=\textwidth,trim={0 0 0 18},clip]{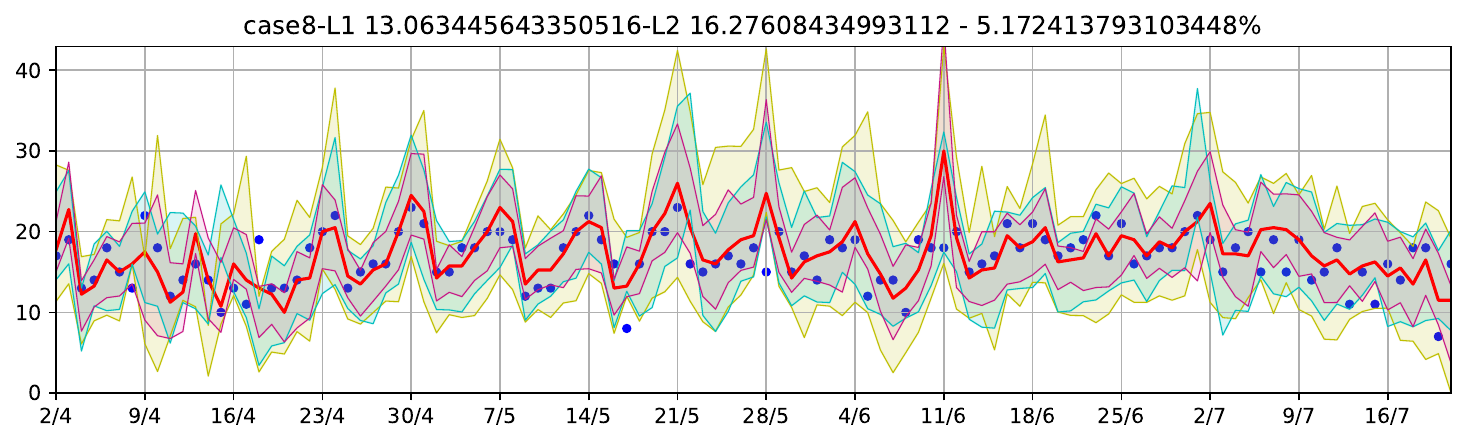}
				\caption{Product sale forecasts with $70\%$, $80\%$ and $90\%$\,PIs for four products calculated using multi-loss function (median shown calculated with 70\%\,PI).}
    \label{fig:example1}
\end{figure}

Figures~\ref{fig:example1} and~\ref{fig:example2} show the results of the two case studies: product demand and store revenue forecast with prediction interval for different confidence levels $\beta$, respectively. The median demand was calculated together with the 70\%\,PI using the multiloss function described above; in addition, the 80\% and 90\%\,PIs are plotted. The interval width grows with $\beta$, as expected (the largest widening should occur just before reaching 100\%, assuming the errors follow either a Laplacian or Gaussian distribution.)

While the widths of the prediction intervals generally remain consistent over time, there are some exceptions. A typical example is the spike of upper PI on 1 May for the first product: an unusually high sale on the previous week's same weekday, combined with a periodic (weekly) sales behaviour, caused the model to increase the uncertainty. As a result, the historic sales data points are indeed captured within the interval.

\begin{figure}[ht!]
        \centering
        \includegraphics[width=\textwidth,trim={0 0 0 18},clip]{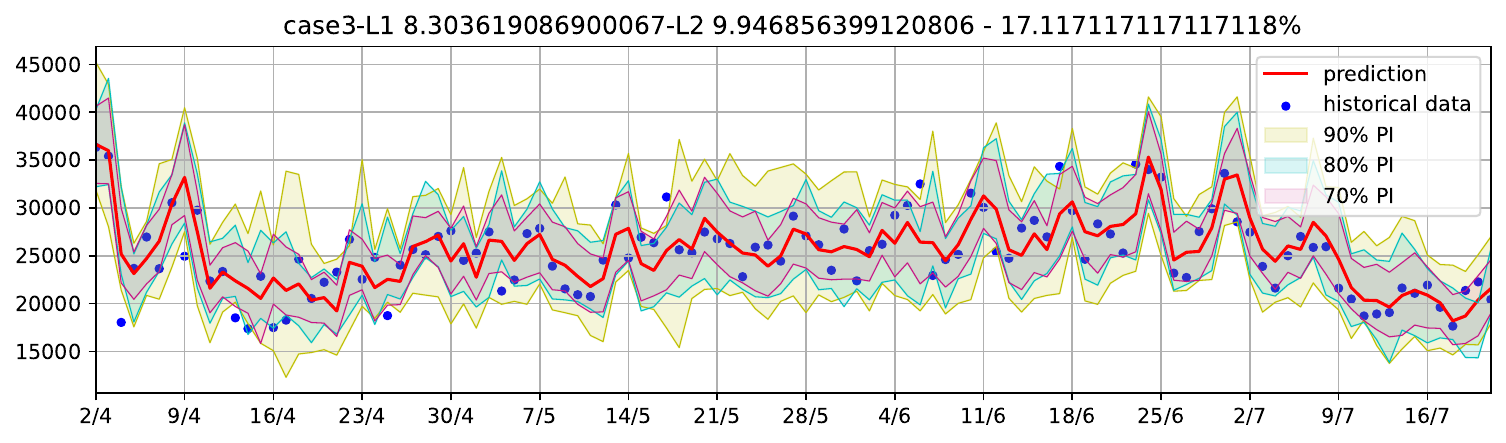}
				\includegraphics[width=\textwidth,trim={0 0 0 18},clip]{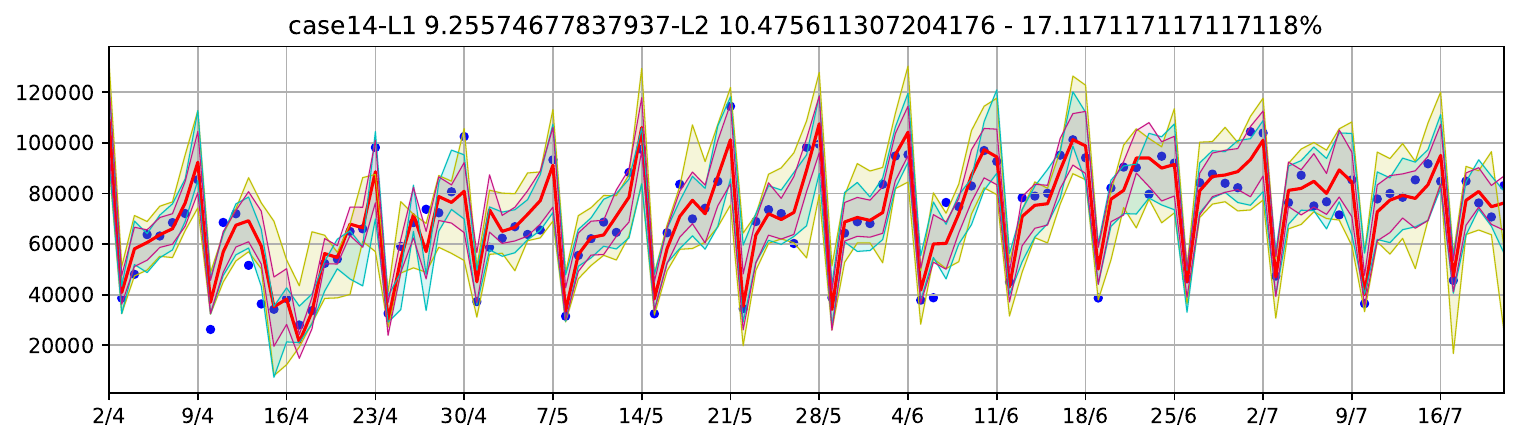} 
				\includegraphics[width=\textwidth,trim={0 0 0 18},clip]{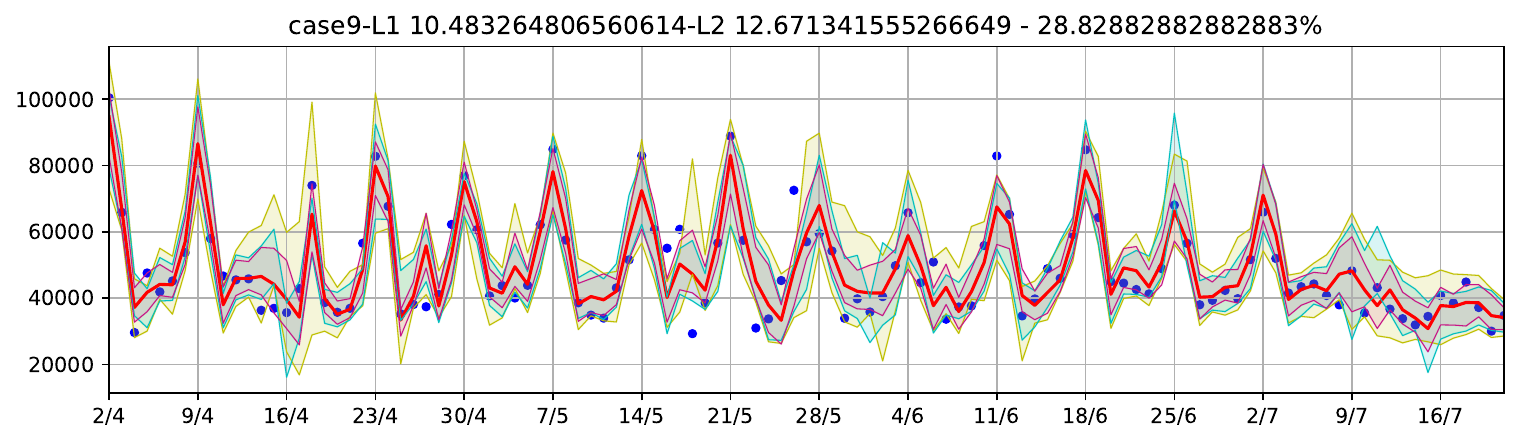}
				\includegraphics[width=\textwidth,trim={0 0 0 18},clip]{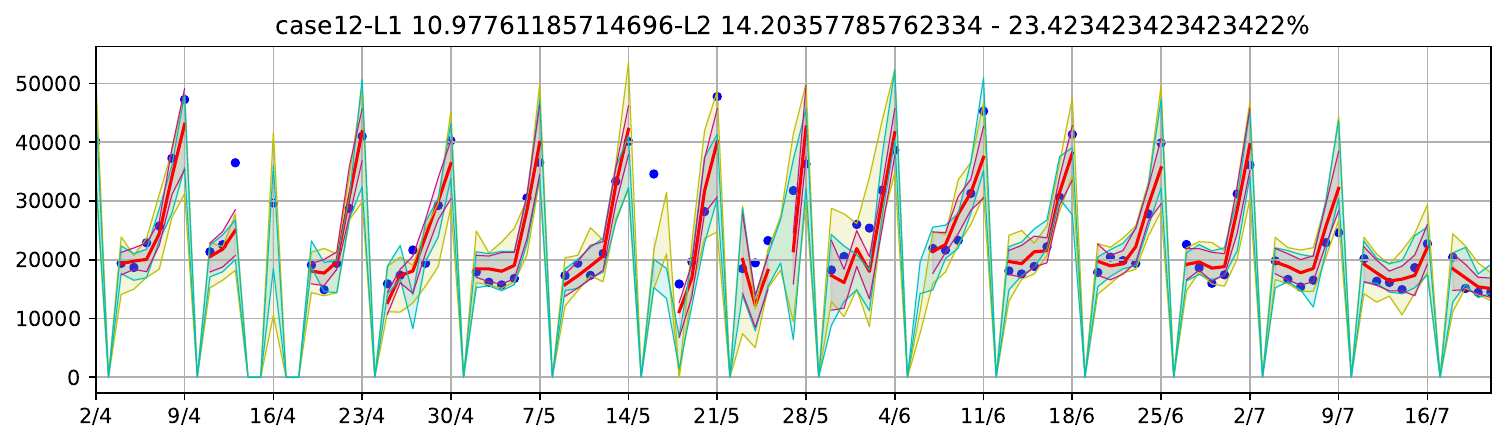}
				\caption{Revenue forecasts with $70\%$, $80\%$ and $90\%$\,PIs for four stores using multi-loss function (median shown calculated with 70\%\,PI.}
    \label{fig:example2}
\end{figure}

As the coverage probability increases, the width of the interval grows larger, based on our test results for 70\%, 80\% and 90\%\,PIs. Assuming the errors follow either a Laplacian or Gaussian distribution, we expect the largest widening of the prediction interval to occur just before reaching 100\%, with a theoretical widening of the PI to infinity, as the probability approaches 100\%.

The PI coverage probability for the three tested widths is shown in Table~\ref{tab: PI width success}. The proportion of future realised values that fall within the PIs was counted for all daily forecasts of 15 product demand and 10 store revenues. Equal or slightly lower than the PI confidence level, the results suggest that indeed our method accurately finds lower and upper bounds on the probability distribution of forecasted quantity. We note the coverage is virtually perfect when we train the model of the same size to predict different percentiles separately, instead of using the multiloss function (which is more efficient for our purposes).

\begin{table}[h]
    \centering
    \caption{The prediction interval coverage probability for tested PI widths of daily forecasts.}
    \begin{tabular}{|c|c|}
    \hline
       PI width & PI coverage \\
			\hline
        0.9 & 0.89 \\
				0.8 & 0.8 \\
				0.75 & 0.74\\
				0.7 & 0.67\\
        \hline
    \end{tabular}
    \label{tab: PI width success}
\end{table}

We additionally experiment with longer forecast horizons required for the practical applications of our model. Figure~\ref{fig:longer_horizons} shows examples of forecasts with 7-day and 14-day horizons, i.e.~performed every 7 or 14 days, respectively. Quite surprisingly, the success-rate on our set of tested products remains high, namely 76\% of historical points lie within the analysed 7-day 75\%\,PI and 74\% of historical points lie within the 14-day 75\%\,PI. The starting and final days of the forecasts are marked by the grid, where they overlap. As expected, at these points we can see that the PI widens towards the end of the predicted period and narrows erratically as the new prediction period begins.

\begin{figure}[ht]
    \centering
    \includegraphics[width=\textwidth]{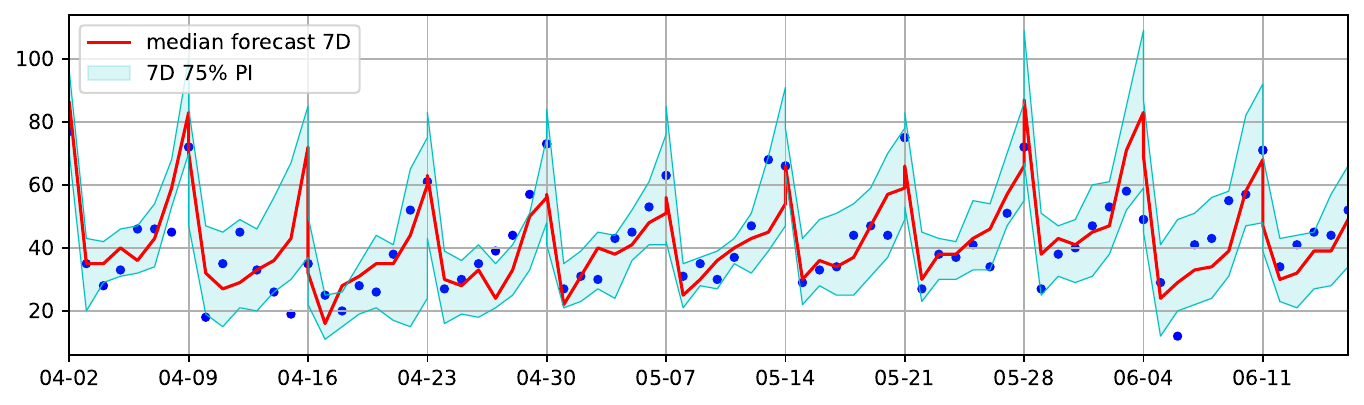}
		\includegraphics[width=\textwidth]{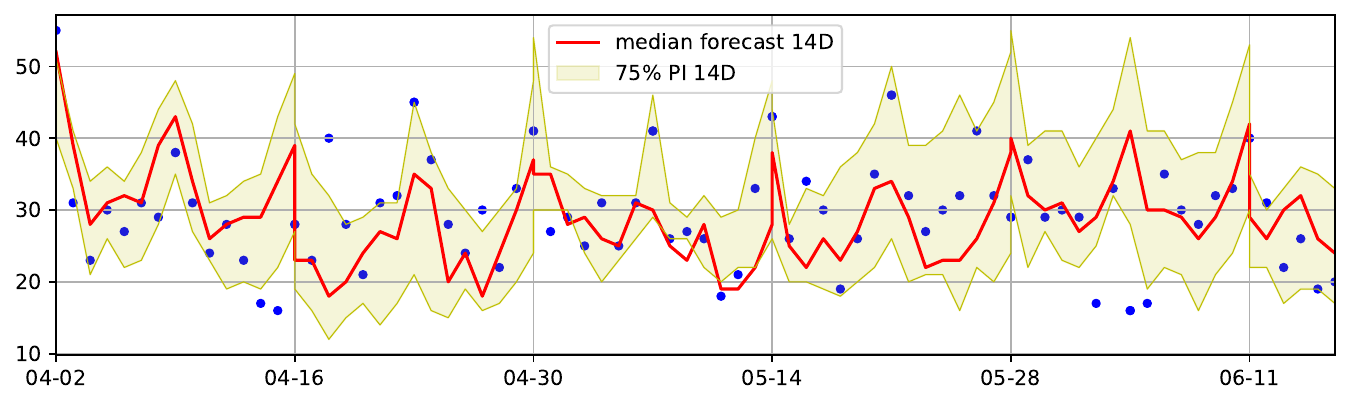}
    \caption{Sale forecasts with 7 and 14-day horizon trained by MAE with 70\%\,PIs, performed every 7 or 14 days, respectively.}
    \label{fig:longer_horizons}
\end{figure}

\section{Final thoughts and future work}
\subsection{Final thoughts}

The standard interpretation of regression models relies on maximum like\-lihood---their estimate is the best they can give based on the available data. Without additional \emph{ad hoc} (as reflected in Remark 1) assumptions about the data distribution, it does not even define what statistics they actually estimate. Neither does it endow us with an intuition about the quality of the estimate or the models' predictive power. In fact, it can be misleading, as increasing the likelihood does not necessarily mean increasing the model accuracy, vide overfitting.

Statistical learning theory provides a more general answer to the question ``what does a nonlinear regression model predict?''. Minimising the expected loss over a sufficiently large i.i.d.~training set, we obtain with high probability a model producing, on previously unseen test data, predictions with the expected loss not much larger than the minimal expected loss possible for given model architecture~\cite{Shai2014}. The only condition is that the test data are drawn from the same distribution as the training set. 

Yet, the actual task of statistical regression models is to provide the range of values that likely contains the true outcome, given the trained parameters and the data. We presented a simple method for training a neural network to predict lower or upper bounds of the PI based on a vector of features. It does it by minimising a weighted version of the MAE loss, the only assumption being that the model is expressive enough to obtain an (approximate) global minimum of the expected loss. We have shown through numerical experiments that this mathematical result holds true in real-world settings, which suggests that our method can be used to create PIs of neural networks in similar setups.

\subsection{Future work}
Although we have demonstrated that the proposed method works for both linear and expressive models, it may be less effective in specific cases. Identifying these weaknesses can help improve our understanding of the method's potential applications.

The relationship between the optimal model parameters $\theta$ and hyperparameters is generally complex and not straightforward. It is worthwhile to explore the relationship between the percentile $\alpha$ and the resulting model parameters $\theta$, particularly in terms of continuity and regularity.

Using different asymmetric loss functions may lead to other interesting minimizations. We conjecture that a similar result exists for a weighted MSE minimisation, where the predicted value is related to the amount of deviation from the prediction found by minimising MSE. This should be directly related to the chosen weight $\alpha$.

Finally, our method works well in practice even though its theoretical requirements are not guaranteed to be satisfied by the standard model training algorithm (see Sec.~\ref{sec:connection}). This surprising finding warrants further investigation, which could lead to interesting insights into how neural networks learn during training.

\section{Acknowledgements}

We would like to thank Professor Erlend Aune for the suggestion to use a multitask loss, which helped us improve the computational efficiency.




\end{document}